\documentclass[11pt]{article}

\usepackage[final]{acl}

\usepackage{times}
\usepackage{latexsym}

\usepackage[T1]{fontenc}

\usepackage[utf8]{inputenc}

\usepackage{microtype}

\usepackage{inconsolata}

\usepackage{hyperref}
\usepackage{pifont}
\usepackage{multirow}
\usepackage{graphicx}
\usepackage{float}
\usepackage[utf8]{inputenc} 
\usepackage[T1]{fontenc}    
\usepackage{url}            
\usepackage{booktabs}       
\usepackage{amsfonts}       
\usepackage{nicefrac}       
\usepackage{microtype}      
\usepackage{xcolor}         
\usepackage{pifont}
\usepackage{multirow}
\usepackage{makecell}
\usepackage{amsmath}
\usepackage{tcolorbox}
\usepackage{colortbl}
\usepackage{ulem}
\usepackage{wrapfig}
\usepackage{subcaption}
\usepackage{color}
\usepackage{enumitem}
\definecolor{lightgray}{gray}{0.9}

\usepackage{algorithm}
\usepackage{algorithmic}
\usepackage{amssymb}
\usepackage{amsthm}
\newtheorem{theorem}{Theorem}

\newtheorem{definition}{Definition}

\newtheorem{assumption}{Assumption}
\newtheorem{corollary}{Corollary}
\newtheorem{remark}{Remark}

\title{Online Self-Weighted Fine-Tuning}

\author{
Haiquan Wen\textsuperscript{1,*}
\quad
Yiwei He\textsuperscript{1,*}
\quad
Bei Peng\textsuperscript{2}
\quad
Guangliang Cheng\textsuperscript{1,\ensuremath{\dagger}}
\\
\textsuperscript{1}University of Liverpool
\quad
\textsuperscript{2}University of Sheffield
\\
{
\small
\textsuperscript{*}Equal contribution.
\quad
\textsuperscript{\ensuremath{\dagger}}\textbf{Correspondence:} {guangliang.cheng@liverpool.ac.uk}
}
}

\begin{document}

\maketitle

\begin{abstract}
Standard supervised fine-tuning (SFT) assigns the same explicit loss weight to every expert demonstration, regardless of the model's changing competence over training queries. Reinforcement learning (RL) based methods adapt update strength using model-generated rollouts, but often require substantially more sampling and can be unstable on hard tasks.
We propose \textbf{Online Self-Weighted Fine-Tuning (OSW-FT)}, a simple method that augments SFT with online, trajectory-level weighting. For each query, OSW-FT estimates the model's current success rate using a small number of inference-only rollouts and rescales the standard SFT loss accordingly. The optimization direction remains anchored to the expert trajectory, while the update magnitude adapts online.
For binary-verifiable reasoning, we connect this weighting to SFT and RL at the gradient level, inspired by variance-reduction principles. The resulting estimator is unbiased for the exact OSW-FT surrogate update for any finite rollout count, and we analyze convergence with respect to the corresponding surrogate objective. Evaluated across Qwen3 series ranging from 0.6B to 4B on multiple challenging benchmarks (e.g., AIME), OSW-FT consistently improves over SFT on small-to-medium scale models.
OSW-FT offers a favorable compute-performance trade-off as a practical approach for fine-tuning small-to-medium LLMs on binary-verifiable reasoning tasks with only \textbf{2 online rollouts}.

\end{abstract}

\section{Introduction}
\label{sec:intro}

Post-training a LLM raises a deceptively simple question: \textit{how much should each training sample update the model at its current stage of training?} This question is especially consequential for verifiable reasoning tasks, where the model's competence on individual training queries can vary substantially and evolves throughout training. Some queries may already be reliably solved by the model and provide limited additional learning signal, while others remain near its capability frontier and warrant concentrated optimization effort.
Standard Supervised Fine-Tuning (SFT) \citep{ouyang2022training}, however, assigns a uniform sample weight to every expert demonstration throughout training, treating already-mastered and unresolved queries identically. While this design is simple and stable, it allocates optimization effort coarsely across examples with very different learning value.

Reinforcement learning (RL) based post-training methods \citep{guo2025deepseek, schulman2017proximal} address this issue from a different direction. By computing advantages from model-generated rollouts, they adapt update strength to the model's current behavior and can focus learning on more informative examples.
This adaptivity is appealing, but it comes with substantial online sampling cost and optimization instability, especially on hard reasoning tasks where failing to discover any correct trajectories entirely deprives the model of a viable learning direction.
SFT avoids this failure mode by always providing a high-quality expert trajectory as the supervision target, but lacks a mechanism to modulate how strongly that trajectory should be learned from. This suggests a natural middle ground: preserve the stable, expert-anchored optimization direction of SFT, while introducing a lightweight online signal that adjusts update magnitude according to the model's current competence.

We propose \textbf{Online Self-Weighted Fine-Tuning (OSW-FT)}, a simple post-training method for verifiable reasoning that changes how much to learn from each expert trajectory without changing what to learn from. For each training query, OSW-FT performs a small number of inference-only rollouts to estimate the model's current pass rate $\hat{p}(q)$, and reweights the standard SFT loss by a factor of $(1 - \hat{p}(q))$. This weight is largest for queries the model consistently fails on, thereby concentrating optimization effort near the capability frontier, and naturally decreases toward zero as the model masters a query. Crucially, OSW-FT uses online rollouts only to estimate this scalar weight; it does not optimize on model-generated trajectories, does not perform token-level reward shaping, and does not modify the training distribution. This design targets settings with a limited online rollout budget while retaining expert-trajectory supervision.

For binary-verifiable reasoning, we construct an explanatory framework connecting this weighting to RL control variates at the gradient level. OSW-FT restores this quantity online through a Monte Carlo estimate from a small number of rollouts. Under our analysis, the resulting estimator is unbiased for the exact OSW-FT surrogate update for any finite rollout count $K$, and the convergence analysis applies to the corresponding surrogate objective under the stated assumptions. The method is naturally most useful when the training distribution remains challenging for the current model; when most training queries are already solved, the online weight correspondingly diminishes.

We evaluate OSW-FT on Qwen3 series \citep{yang2025qwen3technicalreport} models ranging from 0.6B to 4B parameters across AIME \citep{aime2025misc}, AMC \citep{amc2023misc}, MATH-500 \citep{lightman2023let}, and GPQA-Diamond \citep{rein2024gpqa}. Empirically, OSW-FT consistently improves over standard SFT on small-to-medium model scales. Compared with GRPO \citep{shao2024deepseekmath}, the results vary across models and metrics: OSW-FT performs favorably in several limited-rollout settings, whereas GRPO remains stronger in some configurations. OSW-FT is therefore not intended as a general replacement for RL; it targets binary-verifiable reasoning when the available online rollout budget is limited. Ablation studies further validate the benefit of continuous online success-rate weighting over alternative hard-thresholding or random heuristics.

    


In summary, this work makes the following core contributions: \textbf{First}, we introduce OSW-FT, an expert-trajectory-anchored method that modulates SFT updates using an online success-rate weight for verifiable reasoning. \textbf{Second}, we analyze finite-rollout unbiasedness and convergence with respect to the proposed surrogate update. \textbf{Third}, evaluations show that OSW-FT consistently improves over standard SFT on small-to-medium scales and provides a favorable compute–performance trade-off under a limited online rollout budget.

\section{Online Self-Weighted Fine-Tuning}
\label{sec:method}

Supervised Fine-Tuning (SFT) aligns language models by maximizing the log-likelihood of expert demonstrations. Given a dataset $\mathcal{D} = \{(q, o^*)\}$ of input queries $q$ and ground-truth output trajectories $o^*$, SFT trains the model $\pi_\theta$ by applying a uniform weight of 1 to the gradient of every ground-truth trajectory:


\begin{equation}
\begin{split}
    -\nabla_\theta \mathcal{J}_{\text{SFT}} &= \nabla_\theta \mathcal{L}_{\text{SFT}} \\
    &= -\mathbb{E}_{(q, o^*) \sim \mathcal{D}} \left[ 1 \cdot \nabla_\theta \log \pi_\theta(o^* \mid q) \right]
\end{split}
\end{equation}

From a classical estimation perspective, this uniform weighting is strictly principled as the standard maximum likelihood estimator over the data distribution. However, from an online optimization perspective, applying full gradient magnitude to queries the model has already mastered ($\pi_\theta(o^* \mid q) \to 1$) becomes statistically inefficient; it injects redundant gradient noise regarding the evolving policy and dampens the learning efficacy on unresolved frontier queries. To derive a principled variance-reduced gradient estimator that dynamically modulates this update intensity, we cast the problem within the framework of score function estimation.

\subsection{Score Function Decomposition and Optimal Control Variate}

For verifiable reasoning tasks, let $R(o) \in \{0,1\}$ denote the binary correctness of a generated trajectory. The alignment objective is to maximize expected correctness:

\begin{equation}
    \mathcal{J}(\theta) = \mathbb{E}_{q \sim \mathcal{D}} 
    \mathbb{E}_{o \sim \pi_\theta(\cdot \mid q)} [R(o)]
\end{equation}

Applying the log-derivative trick and subtracting a query-dependent control variate $c(q)$---which leaves the expected gradient unbiased since $\mathbb{E}_{o \sim \pi_\theta}[\nabla_\theta \log \pi_\theta(o \mid q)] = 0$---yields the standard variance-reduced score function estimator:


\begin{equation}
\begin{aligned}
    \nabla_\theta \mathcal{J} &= \mathbb{E}_{q \sim \mathcal{D}} \mathbb{E}_{o \sim \pi_\theta} \Big[ \\
    &\quad (R(o) - c(q)) \nabla_\theta \log \pi_\theta(o \mid q) \Big]
\end{aligned}
\end{equation}

We seek optimal control variate $c^*(q)$ that minimizes the MSE of the scalar learning signal:

\begin{equation}
    \min_{c(q)} \; \mathbb{E}_{o \sim \pi_\theta}\left[ (R(o) - c(q))^2 \right]
\end{equation}

Setting the derivative to zero yields the exact analytical solution $c^*(q) = \mathbb{E}_{o \sim \pi_\theta}[R(o) \mid q]$, which for binary tasks simplifies to the model's current success probability $p_s(q)$. Crucially, this optimal control variate is identical to the advantage of the oracle trajectory:


\begin{equation}
\begin{aligned}
    A(o^*, q) &\triangleq R(o^*) - \mathbb{E}_{o \sim \pi_\theta}[R(o) \mid q] \\
    &= 1 - p_s(q)
\end{aligned}
\end{equation}

The MSE-minimizing baseline and the unbiased advantage estimator for $o^*$ are therefore identical for binary rewards, providing a unified statistical justification for the signal magnitude derived below. Substituting $c^*(q) = p_s(q)$ and applying the Law of Total Expectation to decompose by reward outcome, we observe a \textbf{dual role} of $p_s(q)$: it represents both probability of generating correct trajectory ($\Pr[R=1]$) and determines its advantage magnitude:


\begin{equation}
\begin{aligned}
    \nabla_\theta \mathcal{J} = \mathbb{E}_{q \sim \mathcal{D}} \Big[
    &\underbrace{p_s(q)}_{\Pr[R=1]} \cdot \mathbb{E}_{o \sim \pi_\theta \mid R(o)=1} \Big[ \\
    &\quad \underbrace{(1 - p_s(q))}_{A(o^*,\, q)} \nabla_\theta \log \pi_\theta(o \mid q) \Big] \\
    +\, &(1 - p_s(q)) \cdot \mathbb{E}_{o \sim \pi_\theta \mid R(o)=0} \Big[ \\
    &\quad {(0-p_s(q))}\; \nabla_\theta \log \pi_\theta(o \mid q) \Big] \Big]
\end{aligned}
\end{equation}

\subsection{Oracle Substitution in SFT}

The equation highlights a fundamental tension: the positive learning signal is gated by the success probability $p_s(q)$.
In the low-resource or early-training regime where $p_s(q) \to 0$, \textbf{pure reinforcement learning methods} suffer from \textbf{gradient starvation}, as the agent cannot explore and discover valid reasoning paths from scratch. 

\paragraph{SFT as Deterministic Oracle Substitution.}
The essence of SFT is to bypass this exploration bottleneck by replacing the intractable generative probability with the absolute certainty of the dataset $\mathcal{D}$. By manually setting the probability of encountering a correct trajectory strictly to unity ($\Pr[R=1] \to 1$), SFT not only guarantees a continuous learning signal but also \textbf{mathematically nullifies the negative gradient term}:


\begin{equation}
\begin{aligned}
    &\nabla_\theta \mathcal{J} \approx \mathbb{E}_{q \sim \mathcal{D}} \Big[
    \underbrace{1}_{\text{Dataset Certainty}} \cdot \\
    &\mathbb{E}_{o \sim \pi_\theta \mid R(o)=1} \left[
    \underbrace{(1 - p_s(q))}_{A(o^*,\, q)} \nabla_\theta \log \pi_\theta(o \mid q) \right] \Big]
\end{aligned}
\end{equation}


The oracle trajectory $o^*$ serves as a deterministic surrogate for the intractable conditional expectation $\mathbb{E}_{o \sim \pi_\theta \mid R(o)=1}[\cdot]$, translating to: $\nabla_\theta \log \pi_\theta(o \mid q) \approx \nabla_\theta \log \pi_\theta(o^* \mid q)$.

\paragraph{Justification for Asymmetric Substitution.} 
A technical subtlety arises in this transition: the success probability $p_s(q)$ is substituted by $1$ in the empirical frequency slot ($\Pr[R=1]$) but retained within the inner advantage term $(1 - p_s(q))$. We justify this asymmetric treatment through the lens of \textit{distributional decoupling}:
\begin{itemize}
    \item \textbf{Directional Alignment (Data-centric):} SFT performs a strict distributional shift, replacing the model's stochastic generative distribution $\pi_\theta(\cdot \mid R(o)=1)$ with the deterministic expert dataset $\mathcal{D}$. In this expert domain, the empirical frequency of encountering an oracle trajectory is unity ($P_{\mathcal{D}}(R=1) = 1$), which nullifies the negative gradient term and anchors the optimization direction to $o^*$.
    \item \textbf{Magnitude Regulation (Model-centric):} While the optimization \textit{direction} is dictated by the static data distribution, the update \textit{magnitude} must respect the variance-reduced baseline governed by the evolving policy. Retaining $(1 - p_s(q))$ serves as a crucial regularizer to evaluate the competence gap between the current policy state and the oracle target.
\end{itemize}
Consequently, this formulation is not an algebraic identity, but a regularized \textit{Surrogate Gradient} that decouples data-driven direction from policy-driven step-size. Standard SFT applies the distributional shift but simplifies the policy magnitude to a static value of 1, yielding:

\begin{equation}
    \nabla_\theta \mathcal{J}_{\text{SFT}} \propto \mathbb{E}_{(q,o^*) \sim \mathcal{D}} 
    \left[ 1 \cdot 
    \nabla_\theta \log \pi_\theta(o^* \mid q) \right]
\end{equation}

\subsection{OSW-FT Gradient}

\paragraph{Advantage-Guided SFT Magnitude.}
We argue that while the update \textit{frequency} and \textit{direction} should be guided by the oracle $o^*$, the \textit{magnitude} must respect the variance-reduced baseline. Retaining the oracle's deterministic frequency while restoring the advantage magnitude $weight$ yields the \textbf{OSW-FT} gradient:

\begin{equation}
    \nabla_\theta \mathcal{J}_{\text{OSW}} \propto \mathbb{E}_{(q, o^*) \sim \mathcal{D}} \!\left[ {\scriptscriptstyle (1 - \hat{p}(q))} \cdot \nabla_\theta \log \pi_\theta(o^* \mid q) \right]
\end{equation}



where $\hat{p}(q) = \frac{1}{K} \sum_{k=1}^{K} R(\hat{o}_k)$ is an online Monte Carlo estimate of the success probability. Consequently, $(1-\hat{p}(q))$ serves as a principled heuristic that modulates the expert trajectory update using the model's empirical advantage.

\begin{algorithm}[t]
\caption{Online Self-Weighted Fine-Tuning}
\label{alg:osw_ft}
\begin{algorithmic}[1]
\REQUIRE Dataset $\mathcal{D}$, Model $\pi_\theta$, Rollout $K$, Batch Size $B$

\WHILE{not converged}
    \STATE $\mathcal{B} = \{(q_i, o^*_i)\}_{i=1}^{B} \sim \mathcal{D}$

    \STATE Initialize buffer $\mathcal{M} \leftarrow \emptyset$
    \FOR{each $q_i$ in $\mathcal{B}$}
        \STATE Sample $K$ outputs $\{\hat{o}_{i,k}\} \sim \pi_\theta(\cdot|q_i)$
        \STATE Calculate rewards and success probability $\hat{p}_i$
        \STATE Compute weight $w_i = 1 - \hat{p}_i$
        \STATE Store tuple $(q_i, o^*_i, w_i)$ into $\mathcal{M}$
    \ENDFOR
    
    \STATE Compute weighted SFT loss on \textbf{current} $\pi_\theta$:
    $$ \mathcal{L}(\theta) = - \frac{1}{|\mathcal{M}|} \sum_{(q, o^*, w) \in \mathcal{M}} w \cdot \log \pi_\theta(o^* \mid q) $$
    
    \STATE Update parameters.
    
\ENDWHILE
\end{algorithmic}
\end{algorithm}
\vspace{-1em}

\subsection{Discussion}

OSW-FT preserves the original data distribution and dynamically assesses the learning value of each sample online via model rollouts, re-allocating only the gradient \emph{magnitude} through the advantage weight. This ability to adaptively optimize directly on datasets with unlabeled difficulty eliminates the need for manual data sorting. Unlike rollout-optimized RFT methods such as GRPO, the gradient \emph{direction} in OSW-FT is anchored by the high-quality expert demonstration $o^*$; only the magnitude is modulated by the model's current capability. This retains supervised optimization while incorporating an online estimate of the model's current success rate.
The additional online sampling relative to standard SFT is $K$ inference-only rollouts per query. In our experiments, $K=2$ captures a useful advantage signal under a limited online rollout budget. Finally, OSW-FT has a gradient-level structural connection to SFT and RL. Appendix \ref{app:sec:rl} presents this connection under policy-gradient theorem \citep{sutton1999policygradientmethods, schulman2017proximal}.

\section{Theoretical Analysis}
\label{sec:theory}

We establish two key theoretical properties of OSW-FT: robustness to finite rollout estimation and convergence guarantees with smaller gradient second moment than SFT. Full proofs are deferred to Appendix \ref{app:sec:theory}.

\subsection{Robustness to Finite Rollouts}

Success probability $p_s(q)$ is estimated via $K$ rollouts: $\hat{p}(q) = \frac{1}{K}\sum_{k=1}^{K} R(\hat{o}_k)$. 
Denoting $g_0(q) := \nabla_\theta \log \pi_\theta(o^*|q)$, we analyze 
the OSW-FT estimator $\hat{g}_K(q) = (1 - \hat{p}(q)) \cdot g_0(q)$.

\begin{theorem}[Finite Rollout Properties]\label{thm:finite_K_main}
For a fixed query $q$ with $p := p_s(q) \in (0,1)$ and $K$ i.i.d.\ rollouts, the OSW-FT estimator is unbiased: $\mathbb{E}_{\hat{p}}[\hat{g}_K(q)] = (1-p) \cdot g_0(q)$.
\end{theorem}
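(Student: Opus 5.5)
The plan is to exploit the fact that, for a fixed query $q$ and a fixed parameter $\theta$, the only randomness in $\hat{g}_K(q)$ comes from the rollouts $\hat{o}_1,\dots,\hat{o}_K$. The vector $g_0(q)=\nabla_\theta \log \pi_\theta(o^*\mid q)$ depends only on the expert trajectory $o^*$ and on $\theta$. It is therefore a deterministic quantity with respect to the expectation $\mathbb{E}_{\hat{p}}$. The first step is to state this explicitly: the rollouts are inference-only samples drawn from $\pi_\theta(\cdot\mid q)$ at the current parameters, and the gradient is taken only through $\log\pi_\theta(o^*\mid q)$, not through the sampling. Hence $\hat{g}_K(q)$ is a scalar random variable $(1-\hat{p}(q))$ multiplied by a fixed vector.

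Next, linearity of expectation gives
\[
\mathbb{E}_{\hat{p}}[\hat{g}_K(q)] = \bigl(1-\mathbb{E}[\hat{p}(q)]\bigr)\, g_0(q).
\]
It then suffices to compute $\mathbb{E}[\hat{p}(q)]$. Each reward $R(\hat{o}_k)\in\{0,1\}$ with $\hat{o}_k\sim\pi_\theta(\cdot\mid q)$ is a Bernoulli variable, and its mean is
\[
\mathbb{E}_{o\sim\pi_\theta}[R(o)\mid q]=p_s(q)=p,
\]
which is the definition of the success probability used in the control-variate derivation of Section~\ref{sec:method}. Again by linearity,
\[
\mathbb{E}[\hat{p}(q)]=\frac{1}{K}\sum_{k=1}^K p = p .
\]
Equivalently, $K\hat{p}(q)\sim\mathrm{Binomial}(K,p)$. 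Substituting yields $(1-p)\,g_0(q)$ for every finite $K\ge 1$. Independence of the rollouts is not even needed for the mean; it matters only if one also wants the variance $p(1-p)/K$ of the weight. I would note this as a side remark, since it is presumably used in the later second-moment comparison.

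The only real subtlety, and the step I would take most care over, is justifying that $g_0(q)$ may be pulled out of the expectation. In the actual algorithm, the rollouts are generated from the same $\pi_\theta$ being differentiated. One must therefore make precise that the estimator treats $w=1-\hat{p}$ as a stop-gradient constant, so that no score-function term arising from the sampling distribution enters $\hat{g}_K$. I would state this as a standing convention matching Algorithm~\ref{alg:osw_ft}, where $w_i$ is stored in the buffer before the loss is computed. The target $(1-p)g_0(q)$ is exactly the exact OSW-FT surrogate update, not the true gradient of $\mathcal{J}$, so unbiasedness is claimed only with respect to that surrogate. The assumption $p\in(0,1)$ is not needed for the identity itself; it only excludes degenerate cases where $\hat{p}$ is deterministic.
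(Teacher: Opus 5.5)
Your proposal is correct and follows essentially the same route as the paper: treat $g_0(q)$ as deterministic for fixed $q$ and $\theta$, pull it out of the expectation by linearity, and use $\mathbb{E}[\hat{p}(q)]=p$ from the Bernoulli rewards. Your side remarks are consistent with how the paper uses the result: independence matters only for the variance $p(1-p)/K$, and the weight is treated as a stop-gradient constant.
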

 
\begin{remark}[Choice of $K$]\label{rem:K}
The signal loss probability $p^K$ predicts a qualitative transition at $K = 2$. At $K = 1$, the weight is binary ($\{0, 1\}$) and a query with $p = 0.5$ loses its signal with 50\% probability, reducing OSW-FT to hard example selection. At $K = 2$, signal loss drops to $p^2 = 25\%$ and the intermediate weight $w = 0.5$ enables soft discrimination. For $K \geq 2$, $p^K$ is already small and further increases yield diminishing returns. 

\end{remark}


\subsection{Convergence Analysis}
 
A subtlety of OSW-FT is that the weight $1 - p_s(q)$ depends on the current policy $\pi_\theta$, yet Algorithm~\ref{alg:osw_ft} holds it fixed within each batch of gradient updates. To handle this, we define a surrogate objective $J^{(t)}(\theta)$ at each outer iteration $t$, which freezes the weights at their current values while optimizing only the log-likelihood (see Definition~\ref{def:surrogate} in Appendix). Under standard smoothness and bounded-gradient assumptions (Assumption~\ref{assm:regularity} in Appendix), the following guarantee holds.
 
\begin{theorem}[Convergence Rate]\label{thm:convergence_main}
Suppose OSW-FT runs for $T$ total gradient steps with learning rate $\eta = 1/\sqrt{T}$, updating weights every $S$ steps. Let $L$ be the smoothness constant of $J^{(t)}$, $B$ an upper bound on $|\log \pi_\theta(o^*|q)|$, and $\epsilon_p$ the maximum change in $p_s(q)$ between consecutive weight updates. Then:


\begin{equation}
\begin{aligned}
&\frac{1}{T}\sum_{t=1}^{T} \mathbb{E}\!\left[\|\nabla_\theta J^{(\lfloor t/S \rfloor)}(\theta_t)\|^2\right] \\
&\leq \frac{2\Delta_0 + L \cdot \Sigma^2}{\sqrt{T}} + B \cdot \epsilon_p,
\end{aligned}
\end{equation}

where $\Delta_0$ is the initial optimality gap. The stochastic-gradient second moment is


\begin{equation}
\begin{aligned}
    \Sigma^2 &:= \mathbb{E}_{q \sim D} \Big[ \left( (1-p_s(q))^2 \right. \\
    &\quad \left. +\, \frac{p_s(q)(1-p_s(q))}{K} \right) \|g_0(q)\|^2 \Big],
\end{aligned}
\end{equation}

which satisfies $\Sigma^2 < \Sigma^2_{\mathrm{SFT}} := \mathbb{E}_{q \sim D}[\|g_0(q)\|^2]$ for any $K \geq 1$, whenever the model has nonzero success probability on a positive-measure set of queries.
\end{theorem}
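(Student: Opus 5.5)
The plan is the standard nonconvex SGD argument applied to the sequence of frozen-weight surrogates $J^{(s)}$, with an extra term to pay for switching surrogates every $S$ steps. Write $s(t)=\lfloor t/S\rfloor$ and let $\hat g_t = (1-\hat p(q_t))\,g_0(q_t)$ be the stochastic update at step $t$. The update is $\theta_{t+1}=\theta_t+\eta\hat g_t$, an ascent step on the log-likelihood surrogate. By Theorem~\ref{thm:finite_K_main}, conditional on $\theta_t$ we have $\mathbb{E}[\hat g_t]=\nabla_\theta J^{(s(t))}(\theta_t)$, so the estimator is unbiased for the exact surrogate gradient at any $K$.

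\textbf{Step 1 (second moment).} Since $\hat p$ is a Binomial$(K,p)/K$ variable independent of the expert trajectory, we have $\mathbb{E}[(1-\hat p)^2]=(1-p)^2+\mathrm{Var}(\hat p)=(1-p)^2+p(1-p)/K$. Averaging over $q\sim\mathcal{D}$ gives $\mathbb{E}\|\hat g_t\|^2=\Sigma^2$, taken under the bounded-gradient assumption or with $p_s$ evaluated at the current weights. For the comparison with SFT, note $(1-p)^2+p(1-p)/K\le (1-p)^2+p(1-p)=1-p$. This is strictly below $1$ whenever $p>0$. Hence $\Sigma^2<\Sigma^2_{\mathrm{SFT}}$ as soon as $\{p_s>0\}$ has positive measure on a set where $\|g_0\|>0$; I would add that nondegeneracy to the hypothesis.

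\textbf{Step 2 (one-step descent within a block).} Apply $L$-smoothness of $J^{(s)}$:
\[
J^{(s)}(\theta_{t+1})\ge J^{(s)}(\theta_t)+\eta\langle\nabla J^{(s)}(\theta_t),\hat g_t\rangle-\tfrac{L\eta^2}{2}\|\hat g_t\|^2 .
\]
Taking expectations and using unbiasedness gives
\[
\eta\,\mathbb{E}\|\nabla J^{(s)}(\theta_t)\|^2\le \mathbb{E}[J^{(s)}(\theta_{t+1})-J^{(s)}(\theta_t)]+\tfrac{L\eta^2}{2}\Sigma^2 .
\]

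\textbf{Step 3 (telescoping across blocks; the main obstacle).} When the sum over $t$ crosses a block boundary, the potential switches from $J^{(s)}$ to $J^{(s+1)}$. The mismatch at $\theta$ is
\[
\mathbb{E}_q\big[(p_s^{(s+1)}(q)-p_s^{(s)}(q))\log\pi_\theta(o^*|q)\big],
\]
whose magnitude is at most $B\epsilon_p$. Summing gives total telescoped gain $\le \Delta_0+(T/S)B\epsilon_p$. Dividing by $\eta T$ yields the bound
\[
\frac{\Delta_0}{\eta T}+\frac{L\eta\Sigma^2}{2}+\frac{B\epsilon_p}{S\eta}.
\]
With $\eta=1/\sqrt T$ the first two terms give $(2\Delta_0+L\Sigma^2)/(2\sqrt T)\le(2\Delta_0+L\Sigma^2)/\sqrt T$, matching the stated rate up to constants.

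The drift term is the delicate part. Naively it is $B\epsilon_p\sqrt T/S$, which does not match the stated $B\epsilon_p$. I would first try to interpret $\epsilon_p$ as the drift per outer iteration already scaled by $\eta S$, i.e.\ the change in $p_s$ over $S$ steps of size $\eta$; that rescaling is natural because $p_s$ is Lipschitz in $\theta$ and $\|\theta_{t+S}-\theta_t\|\le\eta S G$. Under that reading the term becomes $B\epsilon_p$. If the rescaling does not go through cleanly, I would instead state the requirement $S\gtrsim\sqrt T$, or absorb the definition of $\epsilon_p$ accordingly. Whichever choice is made, $\Delta_0$ must be defined as $\sup_\theta J^{(0)}-J^{(0)}(\theta_1)$, uniformly bounded via $B$, so that the telescoping is valid across all surrogates.
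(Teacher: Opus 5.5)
Your proposal follows the paper's proof essentially step for step: frozen-weight surrogates $J^{(t)}$, unbiasedness from Theorem~\ref{thm:finite_K_main}, the Bernoulli second moment $(1-p)^2+p(1-p)/K$, a per-switch mismatch of at most $B\epsilon_p$ accumulated over $T/S$ switches, and the bound $(1-p)^2+p(1-p)/K\le 1-p$ for $\Sigma^2<\Sigma^2_{\mathrm{SFT}}$, which is equivalent to the paper's gap bound $\ge p$. The paper closes the drift term exactly by your fallback, assuming $T\le S^2$ so that $B\epsilon_p\sqrt{T}/S\le B\epsilon_p$; this is an extra hypothesis absent from the statement, and your rescaled reading of $\epsilon_p$ would instead change the quantity the theorem defines, while your added nondegeneracy condition on $g_0$ is a correct tightening that the paper's corollary silently needs.
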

 
The first term vanishes at rate $O(1/\sqrt{T})$; the second term $B \cdot \epsilon_p$ reflects the cost of holding weights fixed between updates, which is small under typical learning rates. The strict inequality $\Sigma^2 < \Sigma^2_{\mathrm{SFT}}$ implies that OSW-FT converges with smaller second moment than SFT under identical conditions.

\section{Experiments}
\label{sec:exp}

\begin{table*}[t]
    \centering
    \caption{\textbf{Main Results.} Comparison across different post-training paradigms. We report \textbf{Pass@1 / Pass@16} for competition benchmarks (AIME, AMC) to highlight both stability and exploration potential. For MATH-500, GPQA-Diamond, we report \textbf{Pass@1}. \textbf{Bold} indicates the best performance. $^\dagger$ indicates the base model shows lower performance due to limited instruction-following ability, which recovers after fine-tuning.}
    \vspace{-1em}
    \label{tab:main_results}
    \resizebox{\textwidth}{!}{%
    \begin{tabular}{lccccccc}
    \toprule
    \textbf{Model} & \textbf{AIME 24} & \textbf{AIME 25} & \textbf{AMC 22} & \textbf{AMC 23} & \textbf{AMC 24} & \textbf{MATH-500} & \textbf{GPQA-D} \\
    \midrule
    Qwen3-0.6B-Base & \textbf{00.83} / 06.67 & 00.21 / 03.34 & \textbf{11.05} / \textbf{41.86} & 11.82 / 54.35 & 04.58 / 26.67 & 22.00 & 14.14 \\
    \midrule
    \quad + GRPO & \textbf{00.83} / \textbf{10.00} & 00.00 / 00.00 & 08.14 / 25.58 & \textbf{16.30} / 43.48 & \textbf{08.61} / \textbf{31.11} & \textbf{36.20} & \textbf{26.77} \\
    \quad + SFT & 00.00 / 00.00 & 00.21 / 03.34 & 09.01 / 32.56 & 11.55 / 50.00 & 03.89 / 24.44 & 24.20 & 15.66 \\
    \rowcolor{lightgray}
    \quad + OSW-FT & \textbf{00.83} / 06.67 & \textbf{01.04} / \textbf{10.00} & 08.72 / \textbf{41.86} & 13.04 / \textbf{56.52} & 05.14 / 28.89 & 24.20 & 20.20 \\
    \midrule
    Qwen3-1.7B-Base & 03.75 / 20.00 & \textbf{03.96} / 20.00 & \textbf{23.84} / \textbf{58.14} & 28.67 / \textbf{65.22} & 14.17 / \textbf{48.89} & 56.40 & 25.76 \\
    \midrule
    \quad + GRPO & \textbf{06.25} / 23.33 & 02.08 / 13.33 & 20.20 / 48.84 & \textbf{34.10} / \textbf{65.22} & 13.06 / 42.22 & \textbf{59.40} & \textbf{28.79} \\
    \quad + SFT & 02.92 / 20.00 & 03.12 / 10.00 & 18.31 / 55.81 & 26.90 / \textbf{65.22} & 13.75 / \textbf{48.89} & 52.20 & 17.17 \\
    \rowcolor{lightgray}
    \quad + OSW-FT & 05.42 / \textbf{26.67} & \textbf{03.96} / \textbf{23.34} & 22.24 / 55.81 & 27.45 / \textbf{65.22} & \textbf{17.78} / \textbf{48.89} & 57.40 & 26.27 \\
    \midrule
    Qwen3-4B-Base & 09.58 / 30.00 & 06.88 / 23.34 & 36.63 / 69.77 & 39.27 / \textbf{78.26} & 29.31 / 60.00 & 69.60 & 10.10 $^\dagger$ \\
    \midrule
    \quad + GRPO & 05.63 / 20.00 & 05.20 / 23.33 & 39.53 / 58.14 & 34.92 / 67.39 & 22.92 / 53.33 & 71.40 & 30.32 \\
    \quad + SFT & 12.29 / 26.67 & 15.42 / 26.67 & 40.70 / 69.77 & 44.57 / 73.91 & \textbf{35.00} / \textbf{75.56} & 76.40 & 27.27 \\
    \rowcolor{lightgray}
    \quad + OSW-FT & \textbf{13.33} / \textbf{36.67} & \textbf{16.25} / \textbf{36.67} & \textbf{41.42} / \textbf{74.42} & \textbf{45.11} / \textbf{78.26} & \textbf{35.00} / \textbf{75.56} & \textbf{79.00} & \textbf{32.32} \\
    \bottomrule
    \end{tabular}%
    }
    \vspace{-1em}
\end{table*}

\subsection{Experimental Setup}
We evaluate our method across model scales with the Qwen3 series (0.6B, 1.7B, and 4B) \citep{yang2025qwen3technicalreport} to comprehensively study the interaction between model capacity and our dynamic weighting mechanism. We compare OSW-FT against standard SFT and GRPO \citep{shao2024deepseekmath}. 

\paragraph{Training Dataset.} 
To establish a rigorous testbed for reasoning capabilities, we curated a high-quality training set of 10k multiple-choice questions (MCQs) sampled from AM-Qwen3-Distilled \citep{tian2025correctanswersequaldistillation} --- a NuminaMATH dataset \citep{numina_math_datasets} distilled via Qwen3-235B.
To ensure the provision of high-quality oracle trajectories ($o^*$) required for both SFT and OSW-FT, we manually filter and clean the dataset. Specifically, we rigorously enforce a single-choice answer format and constrain the reasoning trajectories to a maximum length of 4k tokens with the vast majority around 2k tokens. This length distribution ensures that even small-capacity base models can learn from substantial reasoning steps efficiently.

\paragraph{Experiment Details.}
All models undergo full-parameter fine-tuning. We employ the AdamW optimizer \citep{loshchilov2017decoupledadamw} with a constant learning rate schedule, alongside \texttt{DeepSpeed ZeRO3} and \texttt{bfloat16} precision across all model scales. 
To minimize generation overhead, we integrate \texttt{vLLM} \citep{kwon2023efficientvllm} for both training rollouts and evaluation. Its high-throughput capabilities enable us to maintain a large training rollout batch size (128), making the online cost of OSW-FT highly efficient.

The maximum sequence length is set to 5k tokens, with a maximum completion length of 4k tokens allocated for online rollouts. For OSW-FT and GRPO, we use Math-Verify \citep{huggingface2025mathverify} for binary accuracy reward function. 

\begin{itemize}
    \item \textbf{SFT:} We fine-tune the models for 1 epoch with a learning rate of $1 \times 10^{-5}$.
    \item \textbf{OSW-FT (Ours):} Following the same 1-epoch schedule and $1 \times 10^{-5}$ learning rate. For each query, we generate $K=2$ trajectories at a temperature of $T=1.0$ to estimate the online success probability $\hat{p}(q)$.
    \item \textbf{GRPO:} For each query, we generate $K=8$ trajectories (group size of $G=8$) at a temperature of $T=1.0$. We adopt $1 \times 10^{-6}$ learning rate, aligning with established community best practices for stable RL training. To mitigate variance from trajectory lengths, we employ token-level policy gradient loss (DAPO, \citep{yu2025dapo}). We set KL penalty coefficient $\beta=0$ (disabling KL) and apply clipping bound of $[0.8, 1.28]$.
\end{itemize}

\paragraph{Evaluation Protocol.}
To thoroughly assess verifiable reasoning capabilities, we report performance on several challenging benchmarks, including advanced mathematical reasoning tasks such as AMC \citep{amc2023misc}, AIME \citep{aime2025misc}, and MATH-500 \citep{lightman2023let}, as well as the expert-level general question answering GPQA-Diamond \citep{rein2024gpqa}.
We report \textbf{Pass@1}/\textbf{Pass@16} for competition benchmarks (AMC, AIME). We use Pass@16 as an empirical indicator of exploration: under the same 16-sample budget, a higher Pass@16 means that the model can discover a correct solution on more queries through sampling. Additionally, for benchmarks comprising hundreds of items (MATH-500 and GPQA-Diamond), we report \textbf{Pass@1} as an auxiliary test to evaluate deterministic problem-solving accuracy on relatively simpler or out-of-distribution (OOD) tasks.
We apply sampling with $Top_p$=0.95, $Top_k=20$, and a sampling temperature of $T=0.6$, allowing a maximum generation length of 8k tokens.

\subsection{Main Results}
Table \ref{tab:main_results} presents the main comparative results. Overall, OSW-FT improves over standard SFT across the evaluated small-to-medium model scales. Its comparison with GRPO varies across models and metrics, reflecting a trade-off rather than a general performance ordering between expert-anchored training and RFT.

\paragraph{Enhancing Exploration in Small Language Models.} 
For the 0.6B, 1.7B, and 4B models, OSW-FT consistently outperforms or matches standard SFT on Pass@16. For instance, on the 4B model, OSW-FT achieves a Pass@16 of 36.67\% on both AIME 24 and AIME 25, compared with 26.67\% for SFT. From these Pass@16 results, we observe that OSW-FT encourages more exploration under the same sampling budget: it discovers correct solutions for a larger fraction of queries across repeated samples. Relative to GRPO, OSW-FT performs favorably on several competition-math metrics, while GRPO remains stronger in some other settings, particularly on Pass@1 for the smaller models.

\paragraph{The Capacity-Data Mismatch Phenomenon.}
The optimization dynamics of OSW-FT further explain its scale-dependent behavior. As illustrated in Figure \ref{fig:osw_weight_dynamics}, the advantage weight $(1-\hat{p})$ acts as an implicit curriculum, decaying as the model masters the dataset. Crucially, these trajectories cleanly stratify by model capacity: the highly capable 4B model masters the dataset rapidly, causing its weights to plummet toward zero, whereas the 0.6B model maintains a sustained learning signal. 


\begin{figure}
    \centering
    \includegraphics[width=\linewidth]{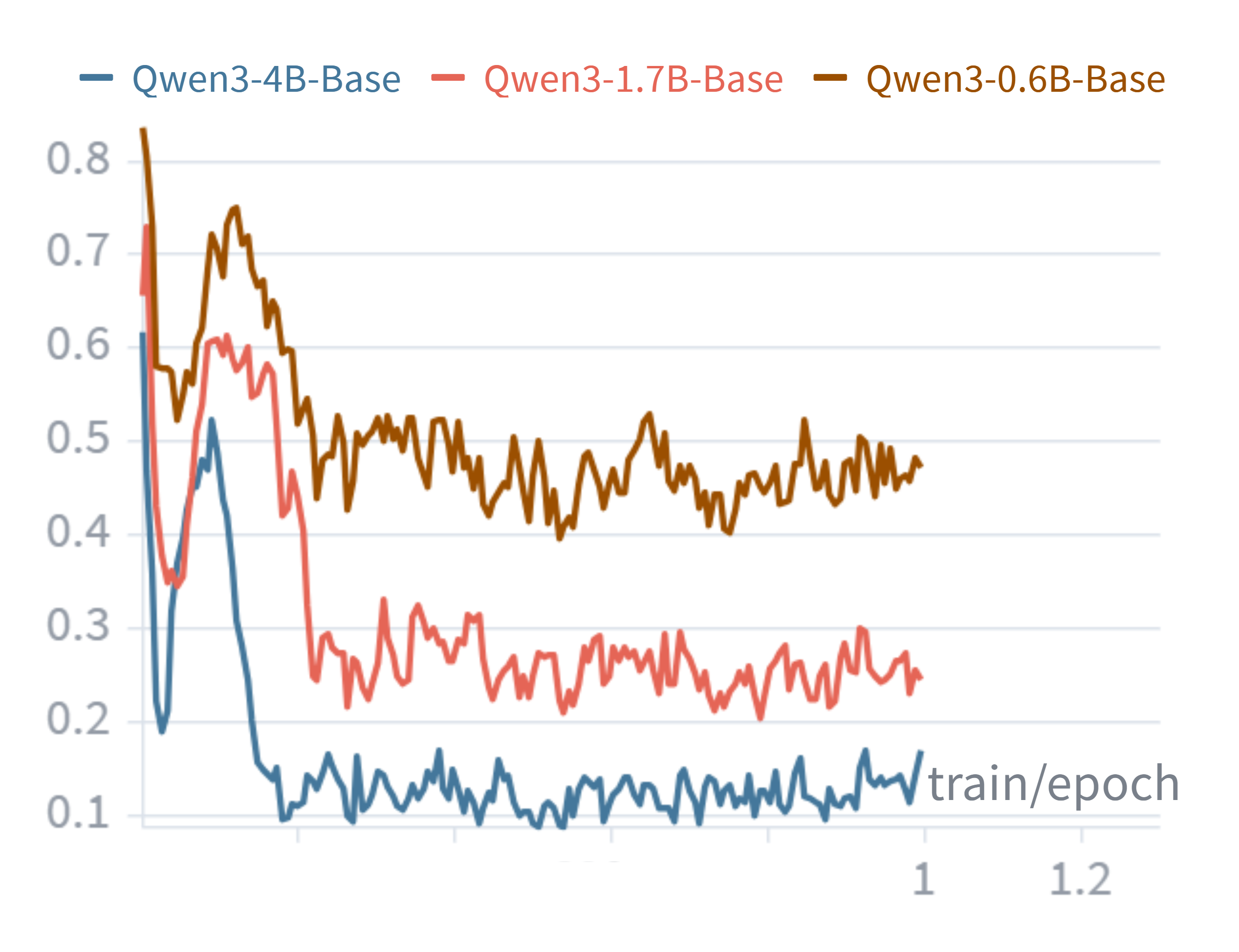}
    \vspace{-2em}
    \caption{OSW-FT weight $(1-\hat{p})$ stratifies by model scale, validating the model-data matching hypothesis.}
    \label{fig:osw_weight_dynamics}
    \vspace{-2em}
\end{figure}

\begin{table*}[t]
    \centering
    \caption{\textbf{Ablation on the number of rollouts ($K$).} We report \textbf{Pass@16} for AIME 24, AIME 25 and \textbf{Pass@1} for MATH-500, GPQA-D on Qwen3-0.6B-Base. OSW-FT maintains robust performance at $K=2$, while GRPO is more sensitive to the rollout count on the AIME benchmarks in this setting.}
    \vspace{-1em}
    \label{tab:ablation_k}
    \resizebox{0.8\textwidth}{!}{%
    \begin{tabular}{@{} l cccc cccc @{}}
    \toprule
    \multirow{2}{*}{\textbf{Rollouts}} & \multicolumn{4}{c}{\textbf{GRPO}} & \multicolumn{4}{c}{\textbf{OSW-FT}} \\
    \cmidrule(lr){2-5} \cmidrule(lr){6-9}
    & \textbf{AIME 24} & \textbf{AIME 25} & \textbf{MATH-500} & \textbf{GPQA-D} & \textbf{AIME 24} & \textbf{AIME 25} & \textbf{MATH-500} & \textbf{GPQA-D} \\
    \midrule
    $K = 1$ & -- & -- & -- & -- & \textbf{06.67} & 06.67 & 22.80 & 15.15 \\
    $K = 2$ & 00.00 & 00.00 & 29.00 & 22.72 & \textbf{06.67} & \textbf{10.00} & 24.20 & 20.20 \\
    $K = 4$ & 06.67 & 00.00 & \textbf{37.00} & 26.27 & \textbf{06.67} & \textbf{10.00} & 24.00 & \textbf{20.71} \\
    $K = 8$ & \textbf{10.00} & 00.00 & 36.20 & \textbf{26.77} & \textbf{06.67} & \textbf{10.00} & \textbf{24.60} & \textbf{20.71} \\
    \bottomrule
    \end{tabular}%
    }
    \vspace{-1em}
\end{table*}

\subsection{Performance under Limited Rollout Budgets}
As established in Remark \ref{rem:K} (Section \ref{sec:theory}), our theoretical analysis of the signal loss probability predicts a critical qualitative transition at $K=2$. At this minimal threshold, the advantage weight shifts from binary hard-example mining to fine-grained soft discrimination. Guided by this theoretical insight, a central computational consideration is empirically verifying this optimal trade-off against the overhead introduced by the online rollouts. Because these rollouts strictly serve to estimate the baseline for variance reduction rather than to discover the correct reasoning path from scratch, minimal generation overhead should theoretically suffice. To validate this, we conduct an ablation study on the Qwen3-0.6B-Base model, varying $K \in \{1, 2, 4, 8\}$. The results are shown in Table \ref{tab:ablation_k}.

\begin{table}[t]
    \centering
    \caption{\textbf{Ablation on alternative weighting strategies ($K = 4$).} We report \textbf{Pass@16} for AIME 24, AIME 25 and \textbf{Pass@1} for MATH-500, GPQA-D on Qwen3-0.6B-Base. OSW-FT consistently outperforms alternative weighting heuristics, validating the necessity of smooth online success-rate anchoring.}
    \vspace{-1em}
    \label{tab:ablation_w}
    \resizebox{0.48\textwidth}{!}{%
    \begin{tabular}{@{} l cccc @{}}
    \toprule
    \textbf{Strategy} & \textbf{AIME 24} & \textbf{AIME 25} & \textbf{MATH-500} & \textbf{GPQA-D} \\
    \midrule
    Hard-thresholded & \textbf{06.67} & 06.67 & 23.80 & 15.15 \\
    Random & 00.00 & 00.00 & 21.40 & 15.17 \\
    \midrule
    \textbf{OSW-FT} & \textbf{06.67} & \textbf{10.00} & \textbf{24.00} & \textbf{20.71} \\
    \bottomrule
    \end{tabular}%
    }
    \vspace{-1em}
\end{table}

\paragraph{OSW-FT.}
As shown in Table \ref{tab:ablation_k}, setting $K=1$ acts as a highly noisy, binarized estimator and yields the lowest performance across all metrics (e.g., $15.15\%$ on GPQA-D). However, simply increasing the rollouts to $K=2$ produces a sharp performance improvement, jumping to $20.20\%$ on GPQA-D and matching or exceeding $K=1$ on all other benchmarks. Critically, the performance plateaus for $K > 2$. Increasing the rollouts to $K=4$ or $K=8$ yields only negligible marginal benefits. Consequently, a coarse estimate via $K=2$ captures the necessary advantage signal without demanding excessive generation overhead.

\paragraph{Contrast with GRPO under Limited Rollout Budgets.}
To fully contextualize this computational efficiency, we contrast OSW-FT with the GRPO baseline under matched rollout budgets. Table \ref{tab:ablation_k} suggests a potential limitation of pure RL: while GRPO can optimize shorter-horizon tasks (MATH, GPQA) at minimal compute ($K=2$), it obtains zero scores on the most challenging reasoning benchmarks, scoring \textbf{0.00} on both AIME 24 and AIME 25. This outcome may stem from RL's reliance on model-generated exploration. On highly complex tasks like AIME, the probability of finding a correct long-horizon reasoning path with only 2 random attempts can be low. Consequently, the advantage signal for these hard queries can become sparse, providing the model with limited learning signal. To alleviate this exploration bottleneck and achieve non-zero AIME performance, GRPO can benefit from larger group sizes (e.g., $K=8$).

\paragraph{Advantage of GT Anchoring.}
This contrast suggests a potential advantage of OSW-FT for capacity-constrained models. By deterministically anchoring the gradient direction to the ground-truth oracle trajectory ($o^*$), OSW-FT provides a stable, expert-anchored optimization path across the evaluated tasks. Thus, it can mitigate the exploration bottleneck that can affect pure RL on hard queries, showing consistent performance across the evaluated benchmarks while operating at a limited computational budget ($K=2$).

\begin{figure*}[t]
    \centering
    \begin{subfigure}[b]{0.49\linewidth}
        \centering
        \includegraphics[width=\linewidth]{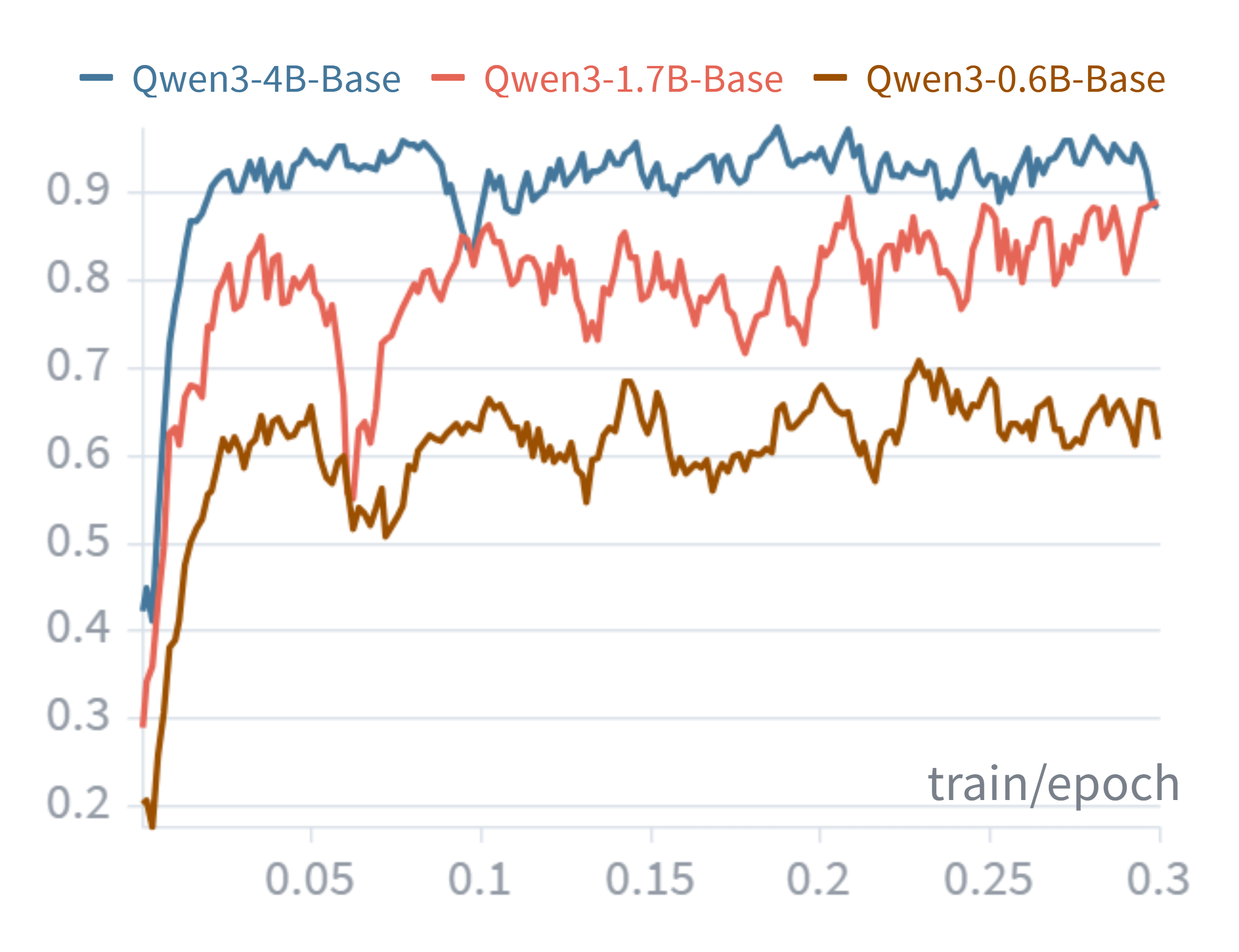}
        \vspace{-2em}
        \caption{GRPO learning rate of $1 \times 10^{-6}$}
        \label{fig:grpo_reward_dynamics}
    \end{subfigure}
    \hfill
    \begin{subfigure}[b]{0.49\linewidth}
        \centering
        \includegraphics[width=\linewidth]{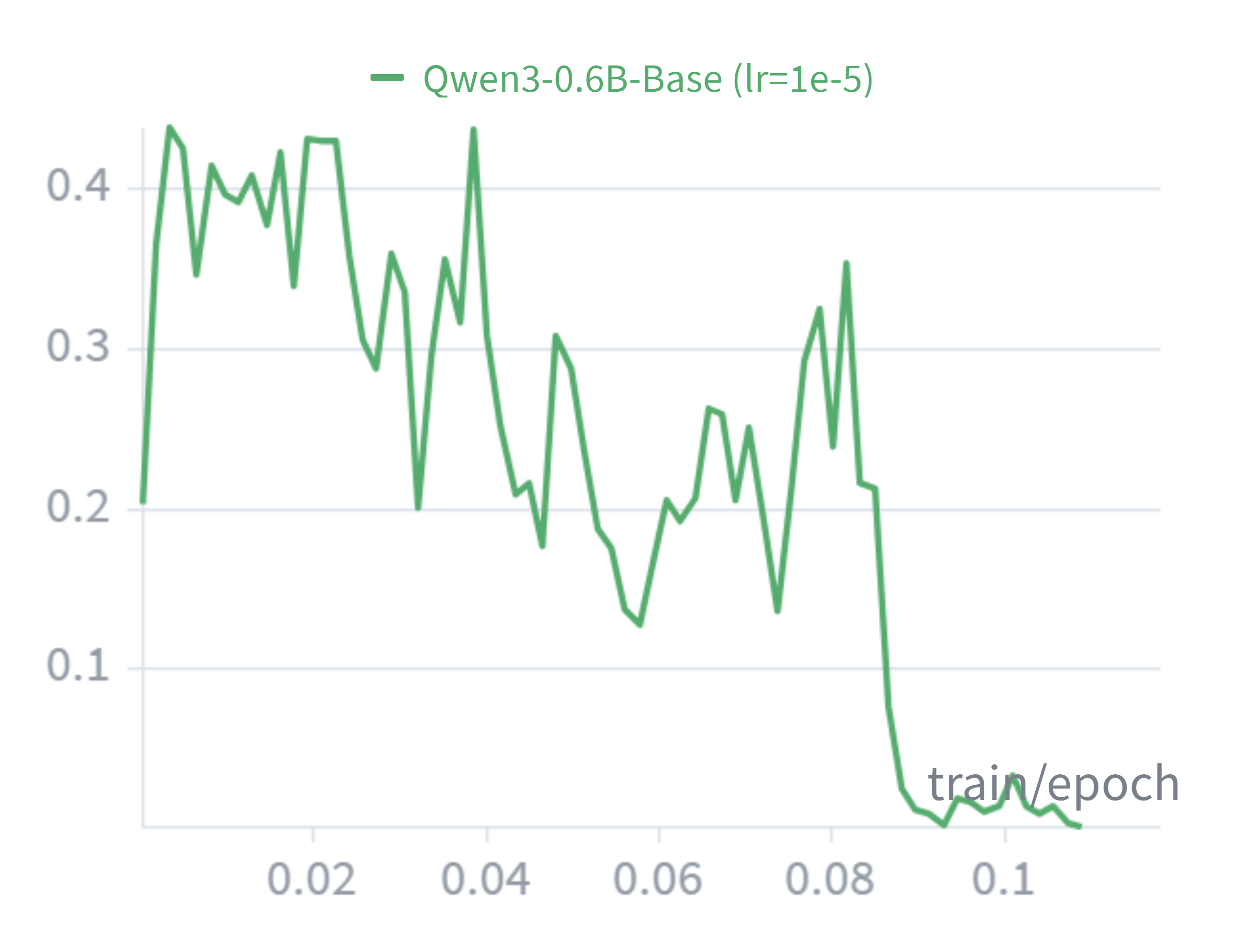}
        \vspace{-2em}
        \caption{GRPO learning rate of $1 \times 10^{-5}$}
        \label{fig:grpo_lr}
    \end{subfigure}
    \caption{GRPO training reward under two learning rates. In our setup, the reward collapses toward zero at $1 \times 10^{-5}$ across the evaluated model scales, whereas $1 \times 10^{-6}$ produces the stable runs used in the main comparison.}
    \label{fig:grpo_reward_lr}
    \vspace{-1em}
\end{figure*}

\subsection{Ablation on Weighting Strategies}

To thoroughly investigate whether the performance gains of OSW-FT stem from the mathematically derived online success-rate weighting rather than naive data filtering or generic optimization artifacts, we conduct a rigorous ablation study on Qwen3-0.6B-Base under rollout budget of $K=4$. We implement two critical control baselines:

\begin{itemize}
    \item \textbf{Hard-thresholded Weighting:} A discrete control scheme where a query receives a binary weight $w \in \{0, 1\}$. Specifically, $w = 0$ if all $K$ rollouts succeed (indicating the query is fully mastered), and $w = 1$ otherwise. This mimics hard rejection sampling or hard mistake-driven learning.
    \item \textbf{Random Weighting:} A control scheme that decouples the loss weights from the model's actual performance. It assigns random weights drawn from a uniform distribution, calibrated to match the exact empirical mean weight of OSW-FT across the dataset, thereby testing if our method simply acts as an implicit learning rate decay.
\end{itemize}

The empirical results are summarized in Table~\ref{tab:ablation_w}. We observe a severe performance degradation when utilizing alternative heuristics. In contrast, OSW-FT scales down gradient updates as capability increases, supporting the benefit of content-aware weighting under a limited online rollout budget.

\subsection{On the Optimization Stability of GRPO}

\paragraph{Learning Rate Sensitivity.}
We initially trained the GRPO baseline using the $1 \times 10^{-5}$ learning rate employed by SFT and OSW-FT. In our setup, this learning rate causes the training reward to collapse toward zero across the evaluated model scales, as shown in Figure \ref{fig:grpo_lr}. Once all sampled trajectories receive the same reward, the relative-advantage signal vanishes and optimization stalls.

We therefore use $1 \times 10^{-6}$ for the reported GRPO baseline. This lower learning rate produces the stable training dynamics shown in Figure \ref{fig:grpo_reward_dynamics}.

\paragraph{Late-Stage Instability and Checkpoint Selection.}
We also observe late-stage degradation in some GRPO runs, particularly for the smaller-capacity models. To report a stable GRPO baseline, we evaluate the checkpoint at which the reward reaches a plateau, before subsequent degradation, rather than using the final checkpoint unconditionally.

\section{Related Work}
\label{sec:related}

\paragraph{Offline Reweighting and Regularization.} 
A growing body of work views standard SFT through an offline Reinforcement Learning lens, aiming to mitigate over-optimization without incurring the computational cost of online generation. One line of research focuses on reweighting (e.g., SoftDedup \citep{he2024softdedup}, iw-SFT \citep{qin2025supervisediwsft}, DFT \citep{wu2026ondft}).
Another line introduces structural regularization, such as ASFT \citep{zhu2026anchored}, which apply KL-divergence penalties against a static reference model to prevent catastrophic forgetting and stylistic overfitting.
While these approaches improve upon uniform SFT, they operate entirely offline. OSW-FT bypasses these static heuristics by operating online, directly tracking the model's continuously evolving capabilities via real-time rollouts.

\paragraph{Online Rollout-Driven Alignment.} 
To incorporate real-time generative feedback, recent paradigms increasingly rely on online rollouts \citep{schulman2017proximal, shao2024deepseekmath, yu2025dapo, zheng2025groupsequencepolicyoptimization, gao2025softsapo, wang2025aspo, chen2025minimax}. STaR \citep{zelikman2022star} filters and trains exclusively on self-generated rationales that yield correct final answers. RL or hybrid-RL methods like GRPO \citep{shao2024deepseekmath} and CHORD \citep{zhang2026onpolicy} are inherently bottlenecked by massive compute overhead.
Conversely, rollout-augmented SFT methods attempt to simplify this optimization but introduce new flaws. OSFT \citep{li2025onlinesftllmreasoning} fine-tunes on model-generated trajectories without filtering by correctness, and lacks explicit mechanisms for accuracy-based gradient scaling. OTR \citep{ming2025one} guides SFT using a localized policy gradient. However, its one-token lookahead provides only myopic, token-level credit signals, lacking explicit trajectory-level credit assignment for long-horizon reasoning.
OSW-FT resolves these bottlenecks by unifying trajectory evaluation with deterministic optimization.

\paragraph{Data Selection and Curriculum Learning.} 
Recent findings highlight that alignment depends heavily on data quality rather than sheer quantity \citep{zhou2023lima, li2024quantity}. To maximize data efficiency, traditional curriculum learning \citep{Bengio2009CurriculumL} presents training examples in a manually predefined order of increasing difficulty. This static paradigm has largely evolved into Automatic Curriculum Learning \citep{narvekar2020curriculum, portelas2020automatic}, where intermediate tasks are dynamically selected based on the agent's real-time competence. 
More recently, curriculum paradigms have been adapted for LLMs \citep{shi2025efficient}, alongside advanced data selection methods like FisherSFT \citep{deb2025fishersft}. However, these LLM-specific approaches typically rely on static heuristics that struggle to accurately track the model's continuously evolving capabilities during training. 
OSW-FT seamlessly shifts the optimization budget toward the model's true capability frontier, entirely eliminating the need for offline filtering or manual data sorting.


\section{Conclusion}
\label{sec:conclusion}

We introduced \textbf{Online Self-Weighted Fine-Tuning (OSW-FT)}, a simple method that augments supervised fine-tuning with online, trajectory-level weighting for verifiable reasoning. By estimating the model's current success rate with a small number of inference-only rollouts, OSW-FT adjusts how strongly each expert trajectory should update the model while keeping the optimization direction anchored to expert demonstrations. Empirically, OSW-FT consistently improves over SFT on small-to-medium models and offers a favorable performance--rollout trade-off when the available online rollout budget is limited.

\section*{Limitations}
\label{sec:limitations}

OSW-FT currently assumes a reliable binary verifier and high-quality expert trajectories. Its success-rate weight is designed for binary-verifiable reasoning and has not been validated for continuous rewards or open-ended generation. Although GPQA-Diamond broadens the evaluation beyond mathematics, it remains a binary-verifiable multiple-choice task and therefore does not establish applicability to non-binary settings.

Our experiments also focus on Qwen3 model family and primarily mathematical reasoning tasks. Extending OSW-FT to code generation and other reward formulations remains future work.



\bibliography{main}

\newpage
\appendix
\section{Ethics Statement}
\label{app:sec:ethics}

We acknowledge the use of Large Language Models (LLMs) during the preparation of this manuscript. Specifically, LLMs were utilized solely as auxiliary tools for minor writing assistance (such as grammar checking, typo correction, and phrasing refinement) and for debugging code implementations. We have reviewed all LLM-assisted output and take full responsibility for the accuracy, originality, and integrity of the work presented in this paper.

\section{Theoretical Proofs}
\label{app:sec:theory}

Throughout, we use $p := p_s(q) = \mathbb{E}_{o \sim \pi_\theta}
[R(o)|q]$ and $g_0(q) := \nabla_\theta \log \pi_\theta(o^*|q)$ 
for a fixed query $q$ satisfying Assumption~\ref{assm:unique_response}.

\begin{assumption}[Binary Verifiable Reward]\label{assm:unique_response}
For each query $q$, the reward is binary: $R(o) \in \{0, 1\}$, and the 
success probability $p_s(q) := \mathbb{E}_{o \sim \pi_\theta}[R(o)|q] 
\in (0,1)$ is well-defined.
\end{assumption}
 
\subsection{Finite Rollout Analysis (Proof of Theorem~\ref{thm:finite_K_main})}
\label{app:subsec:finite_k}
 
\begin{proof}
Let $\hat{p}(q) = \frac{1}{K}\sum_{k=1}^K R(\hat{o}_k)$ where $\hat{o}_k \sim \pi_\theta(\cdot|q)$ are i.i.d.\ rollouts. Since $R(\hat{o}_k) \sim \mathrm{Bernoulli}(p)$, we have $\mathbb{E}[\hat{p}] = p$ and $\mathrm{Var}[\hat{p}] = p(1-p)/K$.

\noindent\textbf{Unbiasedness.}
Since $g_0(q)$ is deterministic for a fixed query $q$:
\begin{equation}
\mathbb{E}_{\hat{p}}[\hat{g}_K(q)] = \mathbb{E}[1 - \hat{p}(q)] \cdot g_0(q) = (1 - p) \cdot g_0(q).
\end{equation}
 
\noindent\textbf{Signal loss.}
The event $w(q) = 0$ occurs iff $\hat{p}(q) = 1$, i.e., all $K$ rollouts are correct. By independence:
\begin{equation}
P(w(q) = 0) = P(\hat{p}(q) = 1) = p^K.
\end{equation}
\end{proof}
 
The following results provide additional quantitative characterization of the estimator under finite rollouts.
 
\begin{remark}[Total second moment]\label{rem:second_moment}
Using the identity $\mathbb{E}[X^2] = (\mathbb{E}[X])^2 + \mathrm{Var}[X]$ with $X = 1 - \hat{p}$:
\begin{equation}
\mathbb{E}_{\hat{p}}[\|\hat{g}_K(q)\|^2] = \left[(1-p)^2 + \frac{p(1-p)}{K}\right]\|g_0(q)\|^2.
\end{equation}
This decomposes into a signal term $(1-p)^2\|g_0\|^2$ and an estimation noise term $\frac{p(1-p)}{K}\|g_0\|^2$ that vanishes as $K \to \infty$. This expression is used in the convergence analysis (Theorem~\ref{thm:convergence_main}).
\end{remark}
 
\begin{remark}[Estimation-induced MSE]\label{rem:mse}
The MSE due to weight estimation can be isolated as:


\begin{equation}
\begin{aligned}
\mathbb{E}_{\hat{p}}\!\left[\|\hat{g}_K(q) - (1-p)g_0(q)\|^2\right] = \\ \frac{p(1-p)}{K}\|g_0(q)\|^2,
\end{aligned}
\end{equation}

which follows from $\hat{g}_K - (1-p)g_0 = (p - \hat{p})g_0$ and $\mathbb{E}[(p-\hat{p})^2] = \mathrm{Var}[\hat{p}] = p(1-p)/K$.
\end{remark}

\subsection{Convergence Analysis (Proof of Theorem~\ref{thm:convergence_main})}
\label{app:subsec:convergence}
 
\begin{definition}[OSW-FT Surrogate Objective]\label{def:surrogate}
At outer iteration $t$ with parameters $\theta_t$, define:
\begin{equation}
J^{(t)}(\theta) = \mathbb{E}_{q \sim \mathcal{D}}\!\left[\underbrace{(1 - p_{\theta_t}(q))}_{\text{fixed weight}} \cdot \log \pi_\theta(o^* \mid q)\right],
\end{equation}
where the weight is computed from $\pi_{\theta_t}$ and held fixed during inner-loop updates.
\end{definition}
 
\begin{remark}[Necessity of stop-gradient]\label{rem:stop_grad}
Differentiating the naive objective $\mathbb{E}_q[(1-p_\theta(q))\log\pi_\theta(o^*|q)]$ through the weight would introduce a $\nabla_\theta p_\theta(q)$ term that the OSW-FT estimator $(1-\hat{p})\,g_0$ does not capture. The surrogate~$J^{(t)}$ makes explicit that $(1-\hat{p})\,g_0$ is an unbiased gradient of $J^{(t)}(\theta)$ at $\theta = \theta_t$, which is the quantity optimized in practice. This matches Algorithm~\ref{alg:osw_ft}, where rollouts and weights are computed once per batch before gradient updates.
\end{remark}
 
\begin{assumption}[Regularity Conditions]\label{assm:regularity}
\begin{enumerate}[label=(\roman*), leftmargin=2em]
    \item \textbf{$L$-smoothness}: For each fixed weight configuration, $J^{(t)}(\theta)$ is $L$-smooth in $\theta$.
    \item \textbf{Bounded second moment}: $\mathbb{E}_{q,\hat{p}}[\|(1-\hat{p}(q))\, g_0(q)\|^2] \leq M$ for all $\theta$.
    \item \textbf{Weight stability}: $|p_{\theta_{t+1}}(q) - p_{\theta_t}(q)| \leq \epsilon_p$ for all $q$ between consecutive outer iterations.
    \item \textbf{Bounded log-likelihood}: $|\log \pi_\theta(o^*|q)| \leq B$ for all $\theta, q$.
\end{enumerate}
\end{assumption}
 
\begin{proof}[Proof of Theorem~\ref{thm:convergence_main}]
The proof proceeds in four steps.
 
\vspace{0.5em}
\noindent\textbf{Step 1: Inner-loop convergence.}
Within each outer iteration $t$, the weights are fixed, and by the unbiasedness established in Theorem~\ref{thm:finite_K_main}, the stochastic gradient $(1-\hat{p}(q))\,g_0(q)$ is an unbiased estimator of $\nabla_\theta J^{(t)}(\theta_t)$. Applying the standard SGD convergence result for $L$-smooth non-convex objectives \citep{ghadimi2013stochastic} over the $S$ inner steps:


\begin{equation}\label{eq:inner_convergence}
\begin{aligned}
&\frac{1}{S}\sum_{s=1}^{S}\mathbb{E}[\|\nabla J^{(t)}(\theta_{t,s})\|^2] \\ 
&\leq \frac{2(J^{(t)}(\theta^*) - J^{(t)}(\theta_{t,0}))}{S\eta} + L\eta\,\Sigma^2,
\end{aligned}
\end{equation}

where


\begin{equation}\label{eq:sigma_def}
\begin{aligned}
&\Sigma^2 := \mathbb{E}_{q \sim \mathcal{D}} \left[ 
\vphantom{\frac{p(q)(1-p(q))}{K}} \right.
(1-p(q))^2 \\
&\left. +\, \frac{p(q)(1-p(q))}{K} \right] \|g_0(q)\|^2
\end{aligned}
\end{equation}

follows from the total second moment expression in Remark~\ref{rem:second_moment} and linearity of expectation over $q$.
 
\vspace{0.5em}
\noindent\textbf{Step 2: Cross-iteration mismatch.}
When switching from outer iteration $t$ to $t+1$, the surrogate objective changes. The mismatch is bounded by:


\begin{equation}
\begin{aligned}
&|J^{(t+1)}(\theta) - J^{(t)}(\theta)| \\
&= \left|\mathbb{E}_q\!\left[(p_{\theta_t}(q) - p_{\theta_{t+1}}(q))\log\pi_\theta(o^*|q)\right]\right| \nonumber \\
&\leq \mathbb{E}_q\!\left[|p_{\theta_t}(q) - p_{\theta_{t+1}}(q)| \cdot |\log\pi_\theta(o^*|q)|\right] \nonumber \\
&\leq \epsilon_p \cdot B, \label{eq:mismatch}
\end{aligned}
\end{equation}

using Assumption~\ref{assm:regularity}(iii) and (iv).

\vspace{0.5em}
\noindent\textbf{Step 3: Accumulating the switching cost.}
Since there are $M = T/S$ outer-iteration transitions, the mismatch accumulates as
\begin{equation}
\Big|\textstyle\sum_{m=1}^{T/S} \big[J^{(m)}(\theta) - J^{(m+1)}(\theta)\big]\Big| \;\le\; \frac{T}{S}\,\epsilon_p B,
\label{eq:accum_mismatch}
\end{equation}
which, after dividing by $\eta T = \sqrt{T}$, contributes a term of order $(\epsilon_p B/S)\sqrt{T}$ to the averaged bound. This term is bounded by the single-transition mismatch $\epsilon_p B$ whenever $T \le S^2$, i.e.\ over training horizons commensurate with the chosen update interval $S$:
\begin{equation}
\frac{\epsilon_p B}{S}\sqrt{T} \;\le\; \epsilon_p B.
\label{eq:mismatch_bounded}
\end{equation}

\vspace{0.5em}
\noindent\textbf{Step 4: Telescoping.}
Summing~\eqref{eq:inner_convergence} over all $T/S$ outer iterations and accounting for the mismatch~\eqref{eq:mismatch} at each transition (with the switching cost bounded as in Step~3), with $\eta = 1/\sqrt{T}$:


\begin{equation}
\begin{aligned}
&\frac{1}{T}\sum_{t=1}^{T} \mathbb{E}\!\left[\|\nabla_\theta J^{(\lfloor t/S \rfloor)}(\theta_t)\|^2\right] \\
&\leq \frac{2\Delta_0 + L \cdot \Sigma^2}{\sqrt{T}} + B \cdot \epsilon_p.
\end{aligned}
\end{equation}

\end{proof}
 
\begin{corollary}[$\Sigma^2 < \Sigma^2_{\mathrm{SFT}}$]\label{cor:sigma_comparison}
The SFT gradient second moment is $\Sigma^2_{\mathrm{SFT}} = \mathbb{E}_q[\|g_0(q)\|^2]$. We show that $\Sigma^2 < \Sigma^2_{\mathrm{SFT}}$ for any $K \geq 1$.
 
Decompose the difference:


\begin{equation}
\begin{aligned}
&\Sigma^2_{\mathrm{SFT}} - \Sigma^2 \\
&= \mathbb{E}_q\!\left[\|g_0\|^2\right] \\
&- \mathbb{E}_q\!\left[\left((1-p)^2 + \frac{p(1-p)}{K}\right)\|g_0\|^2\right] \nonumber \\
&= \mathbb{E}_q\!\left[\left(p(2-p) - \frac{p(1-p)}{K}\right)\|g_0\|^2\right]. \label{eq:gap}
\end{aligned}
\end{equation}
 
We verify the integrand is strictly positive for $p \in (0,1)$ and $K \geq 1$:


\begin{equation}
\begin{aligned}
&p(2-p) - \frac{p(1-p)}{K} = p\!\left[(2-p) - \frac{1-p}{K}\right] \\
&\geq p\!\left[(2-p) - (1-p)\right] = p > 0,
\end{aligned}
\end{equation}

where the inequality uses $K \geq 1$. Therefore, whenever $P(p(q) > 0) > 0$, the expectation in~\eqref{eq:gap} is strictly positive, giving $\Sigma^2 < \Sigma^2_{\mathrm{SFT}}$.
\end{corollary}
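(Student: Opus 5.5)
The plan is to compare the two second moments pointwise in $q$ and then integrate. For a fixed query write $p = p_s(q)$. By Remark~\ref{rem:second_moment}, the OSW-FT integrand is $w_K(p)\,\|g_0(q)\|^2$ with
\[
w_K(p) := (1-p)^2 + \frac{p(1-p)}{K},
\]
while the SFT integrand is $1\cdot\|g_0(q)\|^2$. So it suffices to show two things: $w_K(p) \le 1$ for all $p\in[0,1]$, and $w_K(p) < 1$ whenever $p>0$. Then $\Sigma^2 \le \Sigma^2_{\mathrm{SFT}}$ follows by monotonicity of expectation, and strictness comes from a positive-measure argument.

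\textbf{Pointwise step.} Since $p(1-p)\ge 0$ and $K\ge 1$, we have $\frac{p(1-p)}{K} \le p(1-p)$. Hence
\[
w_K(p) \le (1-p)^2 + p(1-p) = 1-p,
\]
and the gap satisfies $1 - w_K(p) \ge p$. This is the bound $p(2-p) - \frac{p(1-p)}{K} \ge p$ written in the statement. I would present it in this ``$w_K(p)\le 1-p$'' form because it is cleaner. The bound also covers the boundary values $p\in\{0,1\}$, which Assumption~\ref{assm:unique_response} nominally excludes but which may occur on some queries: at $p=0$ we get $w_K=1$, and at $p=1$ we get $w_K=0$. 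Consequently
\[
\Sigma^2_{\mathrm{SFT}} - \Sigma^2 = \mathbb{E}_q\big[(1-w_K(p_s(q)))\,\|g_0(q)\|^2\big] \ge \mathbb{E}_q\big[p_s(q)\,\|g_0(q)\|^2\big] \ge 0.
\]
This manipulation requires $\Sigma^2_{\mathrm{SFT}}<\infty$, so that the difference of expectations is well defined. I would take this as part of the regularity assumptions, e.g.\ via a bounded score.

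\textbf{Strictness: the main obstacle.} The expectation $\mathbb{E}_q[p_s(q)\|g_0(q)\|^2]$ is strictly positive only if the set $\{q : p_s(q)>0,\ g_0(q)\neq 0\}$ has positive measure. The hypothesis as stated, ``nonzero success probability on a positive-measure set,'' controls only the first factor. The argument in the statement silently assumes $\|g_0\|>0$ there. If $g_0(q)=0$ on exactly the queries with $p_s(q)>0$, both second moments can coincide.

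I would resolve this by strengthening the hypothesis to require that $p_s(q)\|g_0(q)\|^2>0$ on a positive-measure set, and then conclude by the standard fact that a nonnegative function has positive integral iff it is positive on a set of positive measure. I would also remark that this strengthening is mild. A vanishing expert score $g_0(q)=\nabla_\theta\log\pi_\theta(o^*\mid q)=0$ means $o^*$ is at a stationary point of its own likelihood. This is non-generic away from $\pi_\theta(o^*\mid q)=1$, and the regime $\pi_\theta(o^*\mid q)=1$ is excluded by the bounded log-likelihood condition in Assumption~\ref{assm:regularity}(iv) only in a limiting sense, which is why I would state the condition explicitly rather than derive it.
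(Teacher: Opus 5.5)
This is the paper's argument: the pointwise bound $1-w_K(p)\ge p$, i.e.\ $p(2-p)-p(1-p)/K\ge p$, integrated over $q$. Your strictness caveat is also correct. The paper's final step tacitly needs $g_0(q)\neq 0$ on a positive-measure part of $\{p_s(q)>0\}$; under the standing assumption $p_s\in(0,1)$ this reduces to $\Sigma^2_{\mathrm{SFT}}>0$. Finiteness of $\Sigma^2_{\mathrm{SFT}}$, however, is not actually needed: the bounded-second-moment condition already gives $\Sigma^2<\infty$, so the case $\Sigma^2_{\mathrm{SFT}}=\infty$ is trivial.

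One slip in your closing heuristic: the bound $|\log\pi_\theta(o^*\mid q)|\le B$ only forces $\pi_\theta(o^*\mid q)\ge e^{-B}$. It does not exclude $\pi_\theta(o^*\mid q)=1$ (where $\log\pi=0$) in any sense. This is harmless because you impose the nondegeneracy condition explicitly.
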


\section{Unifying SFT and RL via General Policy Gradient}
\label{app:sec:rl}

To reveal the structural connection between Supervised Fine-Tuning (SFT) and Reinforcement Learning (RL), we unify them under the \textbf{General Policy Gradient} framework. In RL, the gradient update for a model $\pi_\theta$ takes the standard form:

\begin{equation}
    \nabla_\theta J = \mathbb{E} \left[ \underbrace{(R(\tau) - b(q))}_{\text{Advantage / Weight}} \cdot \nabla_\theta \log \pi_\theta(\tau \mid q) \right]
\end{equation}

We view SFT as \textbf{maximizing an objective function} $J_{\text{SFT}} = - \mathcal{L}_{\text{SFT}}$. Under this lens, SFT restricts the trajectory entirely to the deterministic ground-truth oracle ($\tau = o^*$, where the reward $R(o^*) \equiv 1$) and implicitly operates with a \textbf{zero baseline} ($b(q) \equiv 0$). Consequently, the SFT gradient collapses into a static, uniform update:
\begin{equation}
    \nabla_\theta J_{\text{SFT}} \propto \mathbb{E} \left[ \underbrace{(1 - 0)}_{\text{Static Weight } \equiv 1} \cdot \nabla_\theta \log \pi_\theta(o^* \mid q) \right]
\end{equation}

\paragraph{The Missing Baseline Problem.} 
While RL methods (such as GRPO) actively employ dynamic baselines (e.g., the group mean) to reduce variance and scale updates based on the policy's evolving capability, SFT's implicit zero baseline treats all ground-truth trajectories as having identical, maximal advantage. As a consequence, SFT assigns the exact same gradient coefficient to trivial, already-mastered samples ($\pi_\theta(o^* \mid q) \approx 1$) as it does to completely unsolved ones. This leads to inefficient gradient allocation and unnecessary variance injection during training. Notably, this inefficiency arises not from noisy supervision, but from a systematic mathematical mismatch between the static gradient estimator and the policy's true competence.

By formally deriving the variance-reduced baseline under the constraint of deterministic ground-truth trajectories, we naturally arrive at the formulation of \textbf{Online Self-Weighted Fine-Tuning (OSW-FT)}. As established in Section \ref{sec:method}, assigning this optimal baseline to the model's current success probability ($b(q) = p_s(q)$) perfectly restores the dynamic advantage missing from standard SFT, ensuring optimal gradient reallocation while preserving the optimization stability inherent to supervised learning.

\end{document}